\newtheorem{theorem}{Theorem}
\newtheorem{Theorem*}{Theorem}
\newtheorem{Claim*}[Theorem]{Claim}
\newtheorem{CounterExample*}{$\overline{\hbox{\bf Example}}$}
\newtheorem{Example*}[Theorem]{Example}
\newtheorem{Intuition*}[Theorem]{Intuition}
\newtheorem{Joke*}[Theorem]{Joke}
\newtheorem{Lemma*}[Theorem]{Lemma}
\newtheorem{Open problem}[Theorem]{Open problem}
\newtheorem{Question*}[Theorem]{Question}
\newcounter{parentnumber}
\newcommand{\ignore}[1]{}
\def\ignore#1{}
\newcounter{algsubstate}
\renewcommand{\thealgsubstate}{\alph{algsubstate}}
\def \score {\textrm{conformity score}}
\def \Atrain {A^{\textrm{train}}}
\def \Aval {A^{\textrm{val}}}
\def \Atest {A^{\textrm{test}}}
\def \Wtrain {W^{\textrm{train}}}
\def \Wval {W^{\textrm{val}}}
\def \Wtest {W^{\textrm{test}}}
\def \Wcalib {W^{\textrm{calib}}}
\def \Etrain {\mathcal{E}^{\textrm{train}}}
\def \Eval {\mathcal{E}^{\textrm{val}}}
\def \Ect {\mathcal{E}^{\textrm{ct}}}
\def \Etest {\mathcal{E}^{\textrm{test}}}
\def \Ecalib {\mathcal{E}^{\textrm{calib}}}
\def \loss {\mathcal{L}}
\title{Enhanced Route Planning with Calibrated Uncertainty Set}
\author{
 Lingxuan Tang \\
  Department of Systems Engineering\\
  City University of Hong Kong\\
  \texttt{lxtang3-c@my.cityu.edu.hk} \\
   \And
 Rui Luo \\
  Department of Systems Engineering\\
  City University of Hong Kong\\
  \texttt{ruiluo@cityu.edu.hk} \\
  \And
 Zhixin Zhou \\
  Alpha Benito Research \\
  \texttt{zzhou@alphabenito.com} \\
  \And
 Nicolo Colombo \\
  Department of Computer Science\\
  Royal Holloway, University of London\\
  \texttt{nicolo.colombo@rhul.ac.uk} \\
}
\begin{document}
\maketitle

\begin{abstract}
This paper investigates the application of probabilistic prediction methodologies in route planning within a road network context. Specifically, we introduce the Conformalized Quantile Regression for Graph Autoencoders (CQR-GAE), which leverages the conformal prediction technique to offer a coverage guarantee, thus improving the reliability and robustness of our predictions. By incorporating uncertainty sets derived from CQR-GAE, we substantially improve the decision-making process in route planning under a robust optimization framework. We demonstrate the effectiveness of our approach by applying the CQR-GAE model to a real-world traffic scenario. The results indicate that our model significantly outperforms baseline methods, offering a promising avenue for advancing intelligent transportation systems.
\end{abstract}
\keywords{Route planning, robust optimization, covariate information, conformal prediction, quantile regression}

\section{Introduction}
The route planning problem involves determining the most effective paths for a vehicle or a set of vehicles, taking into account various limitations and goals. This issue is crucial in logistics, transportation, and delivery services, where the efficiency of the routes chosen has a direct effect on the economic success and the quality of service provided.

Route planning problems that account for uncertainty are fundamental to intelligent transportation systems and have garnered growing interest. Existing stochastic shortest path models require the exact probability distributions for travel times and often assume independence between these times. Nonetheless, these distributions are frequently unavailable or imprecise due to insufficient data, and the existence of correlations between travel times on various links has been noted \citep{zhang2017robust}. 
Contextual data \citep{patel2023conformal, qi2020data, guo2023data, sadana2023survey} represents some relevant features that correlate with the unknown uncertainty parameter, which is revealed to assist the decision making under uncertainty. The presence of this contextual data enables the creation of a prediction model to forecast the uncertainty, which in turn informs the development of an uncertainty set within the optimization model.

Advances in graph machine learning techniques offer an opportunity to harness this contextual information within the traffic network, enhancing the uncertainty predictions. 
In graph machine learning, the interaction between nodes is captured by edges with associated weights. In a traffic network, where junctions are depicted as nodes and road sections as edges, the edge weights can represent the traffic capacity of each road. 
The prediction of the edge weights is thus vital to modeling the uncertainty within the transportation system. 

Graph Neural Networks (GNNs) have been successfully used in node classification and link prediction tasks.
In this work, we consider their application to edge weight prediction. 
Edge weight prediction has found use in diverse domains such as message volume prediction in online social networks \citep{hou2017deep}, forecasting airport transportation networks \citep{mueller2023link}, and assessing trust in Bitcoin networks \citep{kumar2016edge}. These examples highlight the wide-ranging applicability and importance of edge weight prediction and load forecasting techniques in different network-based systems.

\section{Related Works}

\subsection{Robust Optimization for Networks}
Robust optimization for networks is a rapidly evolving field that aims to address decision-making problems in network settings under uncertainty. Recent advancements in this field have focused on combining prediction algorithms and optimization techniques to solve decision-making problems in the face of uncertainty\citep{sadana2024survey, patel2024conformal}. Sun et al. \citep{sun2024predict} developed the Contextual Predict-then-Optimize (CPO) framework combining probabilistic weather prediction and risk-sensitive value to provide less conservative robust decision-making under uncertainty in network optimization. 

Incorporating uncertainty on the weight of edges in a graph-based network has been explored through random variables. Previous studies have employed Monte Carlo programming \citep{frank1969shortest} and normal distribution \citep{chen2013finding} to model the uncertainty of edge weights and approach the optimal shortest path. Chassein et al. \citep{chassein2019algorithms} proposed a data-driven approach for robust shortest-path problems in a road network. Experiment results on the Chicago traffic dataset demonstrate that ellipsoidal uncertainty sets performed well, and the authors propose using axis-parallel ellipsoids (ignoring correlations) which allows the development of a specialized branch-and-bound algorithm that significantly outperforms a generic solver.

However, the properties of conventional random variables are unable to accurately deliver the characteristics of uncertainty. Recent studies have introduced data-driven robust optimization, which generates uncertainty representations from existing observational data \citep{bertsimas2018data, chassein2019algorithms}. Robust optimization focuses on optimizing under the worst-case scenario disregarding other information about the uncertainty distribution. To address this limitation, Zhang et al. \citep{zhang2017robust} propose the utilization of $\alpha$-reliable parameters to regulate the level of robustness, aiming to achieve a robustness degree of $1-\alpha$.

\subsection{Traffic Prediction}
Many existing studies on traffic forecasting primarily focus on developing deterministic prediction models \citep{bui2022spatial}. Derrow et al. \citep{derrow2021eta} demonstrates successful application of GNNs in travel time predictions in Google Maps. The model represents the road network as a graph, where each route segment is a node and edges exist between consecutive segments or segments connected via an intersection. The message passing mechanism of GNN enables us to handle not only traffic ahead or behind us, but also along adjacent and intersecting roads

Traffic applications, however, often require uncertainty estimates for future scenarios. Zhou et al. \citep{zhou2020variational} incorporate the uncertainty in the node representations. In \citep{xu2023air}, a Bayesian ensemble of GNN models combines posterior distributions of density forecasts for large-scale prediction. \citep{maas2020uncertainty} combine Quantile Regression (QR) and Graph WaveNet to estimate the quantiles of the load distribution.

Traditionally, traffic forecasting is approached as a node-level regression problem \citep{cui2019traffic, jiang2022graph}, i.e. nodes and edges in a graph represent monitoring stations and their connections.
We adopt an edge-centric approach, i.e. we predict traffic flow over road segments through edge regression. 
Interestingly, the strategy aligns with several real-world setups, e.g. the Smart City Blueprint for Hong Kong 2.0, which emphasizes monitoring road segments (edges) rather than intersections (nodes) \citep{office2019smart}.

\subsection{Conformal Prediction}

Conformal prediction (CP) \citep{vovk2005algorithmic}, is a methodology designed to generate prediction regions for variables of interest, thereby enabling the estimation of model uncertainty by substituting point predictions with prediction regions. This methodology has been widely applied in both classification \citep{luo2024trustworthy,luo2024entropy,luo2024weighted} and regression tasks \citep{luo2024conformal, luo2025volume}. Recently, the application of CP has broadened significantly across various domains, including pandemic-driven passenger booking systems \citep{werner2021evaluation}, smartwatch-based detection of coughing and sneezing events \citep{nguyen2018cover}, and model calibration in the scikit-learn library \citep{sweidan2021probabilistic}.

The conventional approach for CP uncertainty estimation presumes the exchangeability of training and testing data. However, by relaxing this assumption, CP can be adapted to diverse real-world scenarios, such as covariate-shifted data \citep{tibshirani2019conformal},games \citep{luo2024game}, time-series forecasting \citep{gibbs2021adaptive, su2024adaptive}, and graph-based applications \citep{zargarbashi23conformal, luo2023anomalous,huang2023uncertainty, luo2024conformalized,wang2025enhancing}. For instance, Tibshirani et al. \citep{tibshirani2019conformal} expand the CP framework to manage situations where the covariate distributions of training and test data differ, while Barber et al. \citep{barber2023conformal} tackle the more complex issue of distribution drift. Similarly, Luo and Colombo \citep{luo2025conformal} apply CP concepts to graph-based models, and Clarkson \citep{clarkson2023distribution} enhances localized CP to create Neighborhood Adaptive Prediction Sets (NAPS), which assign higher weights to calibration nodes that are proximate to the test node. These advancements illustrate the versatility of CP in addressing various forms of data distribution challenges, thereby enhancing its applicability and robustness in practical implementations.

\section{GNN-Based Traffic Prediction}\label{sec: problem}
Let $G=(\mathcal{V}, \mathcal{E})$ be a road network with node set $\mathcal{V}$ and directed edge set $\mathcal{E} \subseteq \mathcal{V} \times \mathcal{V}$, where nodes and edges represent road junctions and road segments, respectively. 
Assume $G$ has $n$ nodes with $p$ node features such as average and maximum travel speed, and the latitude and longitude.
Let $X \in \mathbb{R}^{n\times p}$ be the node feature matrix. 
The binary adjacency matrix of $G$, 
\begin{equation}
A \in \{0, 1\}^{n\times n}, \quad 
A_{e} =
\begin{cases}
1, & \textrm{if } e \in \mathcal{E}; \\
0, & \textrm{otherwise}.
\end{cases}
\end{equation}
encodes the graph structure.

Define the weight matrix as $W \in \mathbb{R}^{n\times n}$, where $W_{e}$ denotes the traffic cost transitioning of edge $e$. 
We split the edge set into three subsets $\mathcal{E} = \Etrain \cup \Eval \cup \Etest$.
We assume we know the weights of the edges in $\Etrain$ and $\Eval$.
The goal is to estimate the unknown weights of the edges in $\Etest$.
We also assume we know the entire graph structure, $A$. 
To mask the validation and test sets, we define \begin{equation}
\Atrain \in \{0, 1\}^{n\times n}, \quad 
\Atrain_{e} =
\begin{cases}
1, & \textrm{if } e \in \Etrain; \\
0, & \textrm{otherwise}.
\end{cases}
\end{equation}
Similarly, we let $\Aval$ and $\Atest$ be defined as $\Atrain$ with $\Etrain$ replaced by $\Eval$ and $\Etest$. 

The weighted adjacency matrix is
\begin{equation}\label{eq: weighted adj train}
\Wtrain =
\begin{cases}
W_{e}, & \textrm{if } e \in \Etrain; \\
\delta_{e}, & \textrm{if } e \in \Eval \cup \Etest; \\
0, & \textrm{otherwise},
\end{cases}
\end{equation}
where $\delta_{e}$ is a positive number assigned to the unknown edge weights, which can be the average of the existing edge weights (training info from general information) or a value embedded with local neighborhood information (training info from the neighborhood). This processing enables the model to know the graph structure, which is very specific to transportation applications, where we know the road network structure but do not know the traffic flow for some of the roads.

\begin{figure}
\centerline{\includegraphics[width=0.8\textwidth,clip=]{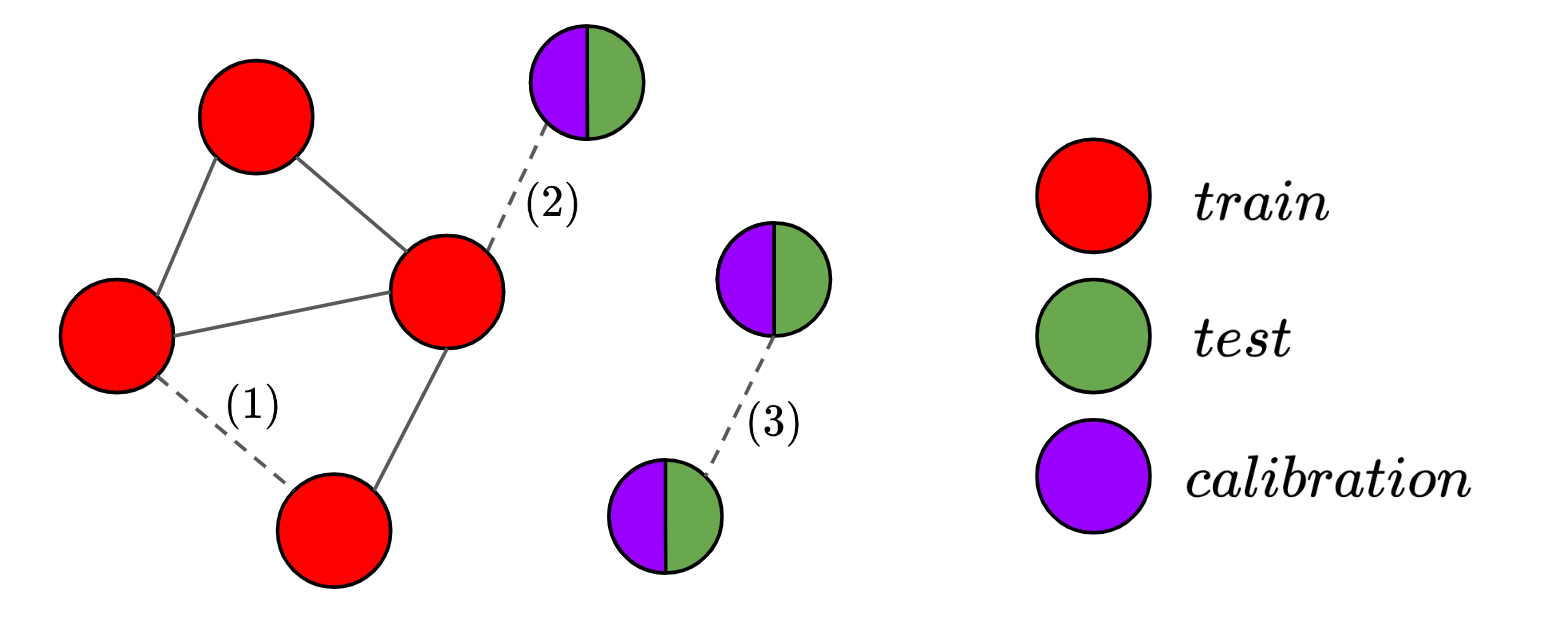}}
\small
\caption{Training settings for edge weight prediction in a conventional data split. 
Different colors indicate the availability of the nodes during training, calibration or testing. 
Solid and dashed lines represent edges used for training and edges within the test and calibration set. 
Predicting (1) corresponds to the transductive setting considered here.
(2) and (3) are examples of the inductive setting.
In road traffic forecasting, (1) may be the undetected traffic flow between two existing road junctions, e.g. for a (new) road where a traffic detector has not yet been installed.
(2) and (3) represent scenarios where new road junctions are constructed, connecting to existing ones or forming connections with each other to create new roads.}
\label{fig: transductive}
\end{figure}

Focusing on road networks, we consider a scenario where we know the entire road systems, $A$, and the traffic volume of certain roads, $\Wtrain \odot \Atrain + \Wval$. $W_{e}$ for $e\in\Etrain\cup\Eval$
The task is to predict the traffic volume of the remaining roads, $\Wtest$. 
During training, the model observes the nodes and leverages their features to make predictions. 
At inference time, the model deduces the edges that connect these nodes.
See Figure \ref{fig: transductive} for a graphical representation of our setup.

To predict the edge weights, we consider using the link-prediction Graph Auto Encoder (GAE).

\subsection{GAE-Based Point Prediction} \label{subsec: GAE}
GAE \citep{kipf2016variational} focuses on learning node embeddings for undirected unweighted graphs.
In addition to undirected unweighted graphs, GAE variants have been extended to graphs with isolated nodes \citep{ahn2021variational}, directed graphs \citep{kollias2022directed}, weighted graphs \citep{zulaika2022lwp}, and graphs with different edge types \citep{samanta2020nevae}. 

To account for the roles of road junctions as either a source or a target in directed road graphs, we adopt a source and a target node embedding matrix, $Z_S$ and $Z_T$, respectively. 
The encoder is a base GNN model\footnote{In the following demonstration, we use the graph convolutional network (GCN) as the base GNN model.}, which learns aggregating information from node neighborhood to update its features:
\begin{equation}\label{eq: DiGAE}
\begin{split}
    H_S^{(l)} &= 
    \begin{cases}
        X, & l=0 \\
        \text{ReLU}\left( {\Wtrain} H_T^{(l-1)} B_T^{(l-1)} \right), & l=1,\dots,L
    \end{cases},
    \\
    H_T^{(l)} &= 
    \begin{cases}
        X, & l=0 \\
        \text{ReLU}\left( {\Wtrain}^\top H_S^{(l)} B_S^{(l)} \right), & l=1,\dots,L
    \end{cases}, \\
    {Z_S} &= {\Wtrain} H_T^{(L)} B_T^{(L)}, \quad  {Z_T} = \langle\Wtrain, H_S^{(L)} B_S^{(L)}\rangle, \\
\end{split}
\end{equation}
where $H_S^{(l)} $ and $H_T^{(l)}$ and $B_S^{(l)}$ and $ B_T^{(l)}$ are the source and target feature and weight matrices at layer $l$.
Note that we use the weighted adjacency matrix $\Wtrain$ (\ref{eq: weighted adj train}) which effectively leverages the entire graph structure.

The decoder produces the predicted weighted adjacency matrix according to
\begin{equation}
    \hat{W} = Z_S {Z_T^\top}.
\end{equation}
To optimize the GNNs parameters, we minimize 
\begin{equation} \label{eq: train DiGAE}
    \loss_{\textrm{GAE}} = \| \Atrain \odot \hat{W} -\Wtrain \|_F.    
\end{equation}
through gradient descent.
We train the model until convergence and then select the parameters that minimize $\loss_{\textrm{GAE}}$ on the validation set, $\Wval$.

In what follows, we describe integrating conformal quantile regression (CQR) \citep{romano2019conformalized} into a GAE model to construct an uncertainty set for edge weights of roads whose traffic volume is unknown.

\subsection{CQR-Based Set Prediction}\label{subsec: CQR}
Since GAE predicts the edge weights based on the embeddings of two adjacent nodes, its outputs are conditionally independent given the node embeddings \citep{jia2020residual}. Conformal quantile regression (CQR) integrates conformal prediction with classical quantile regression,
inheriting the advantages of both approaches \citep{romano2019conformalized}.
We employ CQR to construct edge-weight prediction sets by quantifying the error made by the plug-in prediction interval.

The encoder of the GAE model (Section \ref{subsec: GAE}) is extended by producing a triple output, i.e. three embeddings for each node. 
The decoder utilizes these embeddings for producing the mean, the $\alpha/2$ quantile, and the $(1-\alpha/2)$ quantile of the predicted edge weights. 
This enables parameter sharing which differs from having three single-output GAE encoders.
Let $\hat{W}$, $\hat{W}^{\alpha/2}$, and $\hat{W}^{1 - \alpha/2}$ be the mean, $\alpha/2$, and $(1-\alpha/2)$ quantiles of the edge weights, i.e.
\begin{equation}\label{eq: CQR output}
    f_\theta\left( e; A, X, \Wtrain \right) = \left[ \hat{W}_{e}, \hat{W}^{\alpha/2}_{e}, \hat{W}^{1 - \alpha/2}_{e}  \right].
\end{equation}
We train the embedding by minimizing 
\begin{equation}\label{eq: train CQR-GAE}
    \loss_{\textrm{CQR-GAE}} = \loss_{\textrm{GAE}} + \sum_{e \in \Etrain} \rho_{\alpha/2}(\Wtrain_{e}, \hat{W}^{\alpha/2}_{e}) + \rho_{1 - \alpha/2}(\Wtrain_{e}, \hat{W}^{1-\alpha/2}_{e}), 
\end{equation}
where $\loss_{\textrm{GAE}}$ is the squared error loss defined in (\ref{eq: train DiGAE}) to train the mean estimator, $\hat W$.
The second term is the pinball loss \citep{steinwart2011estimating, romano2019conformalized}, defined as 
\begin{equation}
    \rho_{\alpha}(y, \hat{y}) \coloneqq 
    \begin{cases}
        \alpha (y - \hat{y}) & \textrm{if } y > \hat{y} \\
        (1 - \alpha) (y - \hat{y}) & \textrm{otherwise}
    \end{cases}
\end{equation}
Algorithm \ref{alg: CQR} describes how to obtain the prediction sets, i.e., prediction intervals, given by (\ref{eq: interval CQR}). 

\begin{algorithm}
\caption{Conformal Quantile Regression for Graph Autoencoder}
\label{alg: CQR}
\hspace*{\algorithmicindent} \textbf{Input:} The binary adjacency matrix $A \in \{0, 1\}^{n\times n}$, node features $X\in \mathbb{R}^{n\times p}$, training edges and their weights $\Etrain$, $\Wtrain$, calibration edges and their weights $\Ecalib$, $\Wcalib$, and test edges $\Etest$, user-specified error rate $\alpha \in (0,1)$, GAE model $f_\theta$ with trainable parameter $\theta$.\\
\begin{algorithmic}[1]
\State Train the model $f_\theta$ with $\Wtrain$ according to (\ref{eq: train CQR-GAE}).
\State Compute the \score{} which quantifies the residual of the calibration edge weights $\Wcalib$ projected onto the nearest quantile produced by $f_\theta$ (\ref{eq: CQR output}):
\begin{equation}\label{eq: CQR score}
    V_{e} = \max\left \{ \hat{W}^{\alpha/2}_{e} - W_{e}, W_{e} - \hat{W}^{1-\alpha/2}_{e}  \right\}, \; e \in \Ecalib.
\end{equation}
\State Compute $q =$ the $k$th smallest value in $\{V_{e}\}$, where $k=\lceil(|\Ecalib| +1)(1-\alpha)\rceil$;
\State Construct a prediction interval for test edges:  
\begin{equation}\label{eq: interval CQR}
    C_{e} = \Big[\hat{W}^{\alpha/2}_{e} - q, \hat{W}^{1-\alpha/2}_{e} + q \Big], \; e \in \Etest.
\end{equation}
\end{algorithmic}
\end{algorithm}

We provide prediction intervals for the test edges $e \in \Etest$ with the coverage guarantee:
\begin{equation}\label{eq: coverage guarantee}
    P\big(\Wtest_{e} \in C_{e} \big) \geq 1 - \alpha.
\end{equation}
To demonstrate the coverage guarantee, we adopt the permutation invariance assumption \citep{huang2023uncertainty} for edges. %
As a result of the permutation invariance condition for the GNN training, we arrive at the following conclusion. 

\begin{theorem}
    Given any score $V$ satisfying the permutation invariance condition and any confidence level $\alpha\in (0,1)$, $V_{e}$ is a simple random sample from $\{V_{e}\}_{e\in \{\Ecalib \cup \Etest\}}$. 
    We define the split  conformal prediction set as $\hat{C}_e =\{W_e: V_e(W_e)\le q\}$, where
    $
    q = \inf \{\eta: \sum_{e \in |\Ecalib|} \mathbbm{1}\{V_e(W_e) \le \eta\}\ge(1-\alpha)(1+|\Ecalib|) \}$, $e \in \Ecalib$.
    Then $P(W_e\in\hat{C}_e)\ge 1-\alpha$, $e \in \Etest$.
\end{theorem}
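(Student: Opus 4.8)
The plan is to recognize this as the standard split-conformal coverage guarantee and to supply the two ingredients it rests on: exchangeability of the conformity scores across $\Ecalib \cup \Etest$, and a rank (quantile) argument. The permutation-invariance condition is precisely what delivers exchangeability. Because the trained model $f_\theta$ sees only the training edges $\Etrain$ and then scores every calibration and test edge through the same symmetric function, relabelling the edges of $\Ecalib \cup \Etest$ leaves the joint law of $\{V_e\}_{e \in \Ecalib \cup \Etest}$ unchanged. First I would fix an arbitrary test edge $e^\ast \in \Etest$ and make this explicit, concluding that the $m+1$ scores $\{V_e\}_{e \in \Ecalib \cup \{e^\ast\}}$, where $m = |\Ecalib|$, are exchangeable---equivalently, that $V_{e^\ast}$ is a uniformly random member of this pool, as the statement's ``simple random sample'' phrasing asserts.

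Next I would run the rank argument. Set $k = \lceil (1-\alpha)(m+1) \rceil$. If $k > m$ the infimum defining $q$ is over an empty set, so $q = +\infty$ and coverage is trivially one; otherwise $q$ equals the $k$-th smallest calibration score. Let $R$ be the rank of $V_{e^\ast}$ within $\{V_e\}_{e \in \Ecalib \cup \{e^\ast\}}$. Since adjoining $V_{e^\ast}$ to the calibration scores can only lower the $k$-th order statistic, the event $\{R \le k\}$ forces $V_{e^\ast} \le q$, hence $\{W_{e^\ast} \in \hat C_{e^\ast}\} = \{V_{e^\ast} \le q\} \supseteq \{R \le k\}$. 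Exchangeability makes $R$ uniform on $\{1, \dots, m+1\}$, so
\begin{equation}
P\big(W_{e^\ast} \in \hat C_{e^\ast}\big) \;=\; P\big(V_{e^\ast} \le q\big) \;\ge\; P(R \le k) \;=\; \frac{k}{m+1} \;\ge\; \frac{(1-\alpha)(m+1)}{m+1} \;=\; 1-\alpha,
\end{equation}
which is the claimed bound for the arbitrary test edge $e^\ast$.

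I expect the genuine obstacle to be the first step rather than the arithmetic: one must argue carefully that the permutation-invariance of the GNN upgrades to exchangeability of the scores, since the $V_e$ are not i.i.d.\ but are all produced by a single model fitted on $\Etrain$ and then evaluated on the pooled set $\Ecalib \cup \Etest$. The point is that conditioning on the training stage fixes $f_\theta$ while preserving the symmetry among the remaining edges, so exchangeability survives; I would spell this out using the permutation-invariance assumption of \citep{huang2023uncertainty}. A secondary technical caveat is ties: if the score distribution is not continuous, $R$ need not be exactly uniform, and I would either assume almost-sure distinctness of the scores or retain only the one-sided inclusion $\{V_{e^\ast} \le q\} \supseteq \{R \le k\}$ together with $P(R \le k) \ge k/(m+1)$, which still yields $P(V_{e^\ast} \le q) \ge 1-\alpha$.
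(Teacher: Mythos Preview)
Your proposal is correct and follows essentially the same route as the paper: establish exchangeability of the scores over $\Ecalib \cup \Etest$ via the permutation-invariance condition (the paper phrases this as conditioning on the unordered graph, attributes, labels, and the index sets so that the calibration scores are a simple random sample from the pooled set), and then invoke the standard split-conformal guarantee. The only difference is granularity: the paper black-boxes the second step by citing the standard framework of \citep{vovk2005algorithmic}, whereas you spell out the rank/quantile inequality explicitly and address the tie issue, which makes your version more self-contained.
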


\begin{proof}
    We take into account the complete unordered graph, all attribute and label details, and the index sets $\Etrain$ and $\Ecalib\cup\Etest$. 
    The unordered set of $\{V_k\}^{|\Ecalib|+|\Etest|}_{k=1}$ is fixed as
    $\{V_k\}_{k\in \{\Ecalib\cup\Etest\}}$, and $\{V_k\}_{k=1}^{|\Ecalib|}$ is a simple random sample from $\{V_k\}_{k\in \{\Ecalib\cup\Etest\}}$. 
    As a result, any test sample $V_e(W_e)$, $e\in \Etest$ is exchangeable. 
    Under the standard framework of conformal prediction \citep{vovk2005algorithmic}, this methodology ensures valid marginal coverage, i.e., $P(W_{e} \in \hat{C}_e) \ge 1 - \alpha$, with the expectation taken over all sources of randomness.
    Hence, the coverage guarantee holds for (\ref{eq: coverage guarantee}).
    
\end{proof}

\subsection{ERC-Enhanced Set Prediction}\label{subsec: ERC}
One disadvantage of CQR is that it produces inefficient prediction sets as the correction provided in CQR (Algorithm \ref{alg: CQR}) does not account for data heteroscedasticity \citep{romano2019conformalized} and cannot adapt to data. We generalize Error Re-weighted Conformal approach (ERC) \citep{papadopoulos2008normalized, colombo2023training} to the prediction intervals of CQR-GAE, which reweight the \score{} by a fitted
model of the residuals.

We argue that a benefit of CQR-GAE is that it obtains a measure quantifying residuals for free, without any further training. In particular, we use $|\hat{W}^{1-\alpha/2}_{e} - \hat{W}^{\alpha/2}_{e}|$ to quantify the variability of $e$'s edge weight.

The enhanced prediction intervals are given by
\begin{equation}\label{eq: interval ERC}
    C_{e} = \Big[\hat{W}^{\alpha/2}_{e} - q |\hat{W}^{1-\alpha/2}_{e} - \hat{W}^{\alpha/2}_{e}|, \hat{W}^{1-\alpha/2}_{e} + q |\hat{W}^{1-\alpha/2}_{e} - \hat{W}^{\alpha/2}_{e}| \Big], \; e \in \Etest,
\end{equation}
where $q$ is the $\lceil(|\Ecalib| +1)(1-\alpha)\rceil$-th smallest value in the \score{s} evaluated on the calibration set:
\begin{equation}\label{eq: ERC score}
    V_{e} = \max\left \{ \frac{\hat{W}^{\alpha/2}_{e} - W_{e}}{|\hat{W}^{1-\alpha/2}_{e} - \hat{W}^{\alpha/2}_{e}|}, \frac{W_{e} - \hat{W}^{1-\alpha/2}_{e}}{|\hat{W}^{1-\alpha/2}_{e} - \hat{W}^{\alpha/2}_{e}|}  \right\}, \; e \in \Ecalib.
\end{equation}

\section{Robust Optimization for Route Planning}\label{sec: route planning}
Route planning is a long-standing point of interest in transportation analytical problems \citep{patel2024conformal, mor2022vehicle}. We specifically consider the task of identifying the shortest route from a source node to a target node in a directed graph, defined as the P2P problem \citep{goldberg2005computing}. This problem has significant relevance across various fields, including the calculation of driving directions. 

In this section, we focus on incorporating the prediction intervals obtained from Section \ref{subsec: CQR} into route planning problems to account for decision making under uncertainty.

\subsection{Robust Optimization with Predicted Uncertainty Sets} %
Given the calibrated uncertainty sets for the traffic flow (edge weights) through CQR, we formulate a robust optimization problem that deals with shortest path planning under uncertainty.

Consider the road network $G=(\mathcal{V}, \mathcal{E})$ represented as a weighted directed network graph with $(s, t)$ as the source-target pair. A prediction interval for the traffic flow on each road segment given by the CQR-GAE (Algorithm \ref{alg: CQR}) can be related to a prediction range for the travel cost $c$ on each road \citep{patel2024conformal}. The shortest path problem with edge weight uncertainty aims to find a path from the source to the destination that minimizes the maximum possible travel cost under this uncertainty. Due to the non-negativity of the decision variables $x$, the robust optimization problem can be simplified by considering only the upper bounds of the cost intervals:

\begin{equation}\label{eq: simplified robust shortest path}
\begin{split}
    \min_{x \in [0, 1]^{|\mathcal{E}|} } & \langle c^{\max}, x\rangle \\
    \textrm{s.t. } & Bx=b
\end{split}
\end{equation}
where $c^{\max}\in \mathbb{R}^{|\mathcal{E}|}$ is the edge weight vector composed of the maximum predicted values for each element corresponding to the respective edge. $B \in \{0, 1\}^{|\mathcal{V}|\times|\mathcal{E}|}$ is the node-arc incidence matrix. $b \in \mathbb{R}^{|\mathcal{V}|}$ has source node $b[s] = 1$, target node $b[t] = -1$, and $b[k] = 0$ for $k \notin \{s, t\}$.

However, such a simplification neglects the minimum travel cost $c^{\min}$ for each road, thereby losing a significant amount of useful information that the predictive model offers. To avoid this waste of information, we set the value-at-risk rate to obtain less conservative decisions with the same level of risk guarantee and incorporate side observations into prediction to reduce uncertainty on the objective.

\subsection{Contextual Risk-sensitive Robust Optimization}
Contextual robust optimization considers the optimization problem with the presence of covariates induced by side observations. The classical linear programming problems do not involve covariates. However, due to the presence of uncertainty on travel cost, the objective is highly related to physical occasions, e.g., weather conditions, day of the week \citep{agrawal2019machine, franch2020precipitation}. With the covariates, denoted as $z$, we arrive at the following contextual robust shortest path problem,
\begin{equation}\label{eq: contextual RO shortest path}
\begin{split}
    \min_{x \in [0, 1]^{|\mathcal{E}|} } & \left[\langle c^{\max}, x\rangle|z\right] \\
    \textrm{s.t. } & Bx=b
\end{split}
\end{equation}
where covariates $z$ depict the conditional situation on side observations. In our situation, the node embedding (\ref{eq: DiGAE}) is obtained by the GNN model.

To mitigate the conservatism of this system, a risk-sensitive objective is introduced,
\begin{equation}\label{eq: contextual RO VaR shortest path}
\begin{split}
    \min_{x \in [0, 1]^{|\mathcal{E}|} } & \left[\langle\textrm{VaR}_{\alpha} (c), x\rangle|z\right] \\
    \textrm{s.t. } & Bx=b
\end{split}
\end{equation}
where $\alpha\in (0, 1)$. Here $\textrm{VaR}_{\alpha}(U)$ denotes the $\alpha$-quantile/value-at-risk (VaR) of a random variable $U$; specifically, $\textrm{VaR}_{\alpha}(U):=F^{-1}_{U}(\alpha)$ with $F^{-1}_{U}(\cdot)$ be the inverse cumulative distribution function of $U$. From the information viewpoint, the traditional robust system (\ref{eq: contextual RO shortest path}) primarily focuses on predicting $[c^{\max}|z]$, while solving (\ref{eq: contextual RO VaR shortest path}) may require a distributional prediction of the conditional distribution $[\Tilde{c}|z]$. We circumvent this requirement on an entire distribution by using the quantiles produced by the proposed algorithms.

\section{Experiments}
In this section, we evaluate the prediction intervals produced by the CQR-based method (Section \ref{subsec: CQR}, \ref{subsec: ERC}) on a real traffic network. 
We then perform experiments on robust route planning (Section \ref{sec: route planning}) using two real-world traffic datasets. We evaluate the travel costs arising from various prediction intervals generated by CQR-based methods with those from baseline approaches.

\subsection{Traffic Network Analysis}\label{sec: empirical}
We apply CQR to a real-world traffic network, specifically the road network and traffic flow data from Chicago \citep{bar2021transportation}. The Chicago dataset consists of 546 nodes representing road junctions and 2150 edges representing road segments with directions. In this context, each node is characterized by a two-dimensional feature $X_i\in \mathbb{R}^{2}$ representing its coordinates, while each edge is associated with a weight that signifies the traffic volume passing through the corresponding road segment.

\begin{figure}
\centerline{\includegraphics[width=\textwidth,clip=]{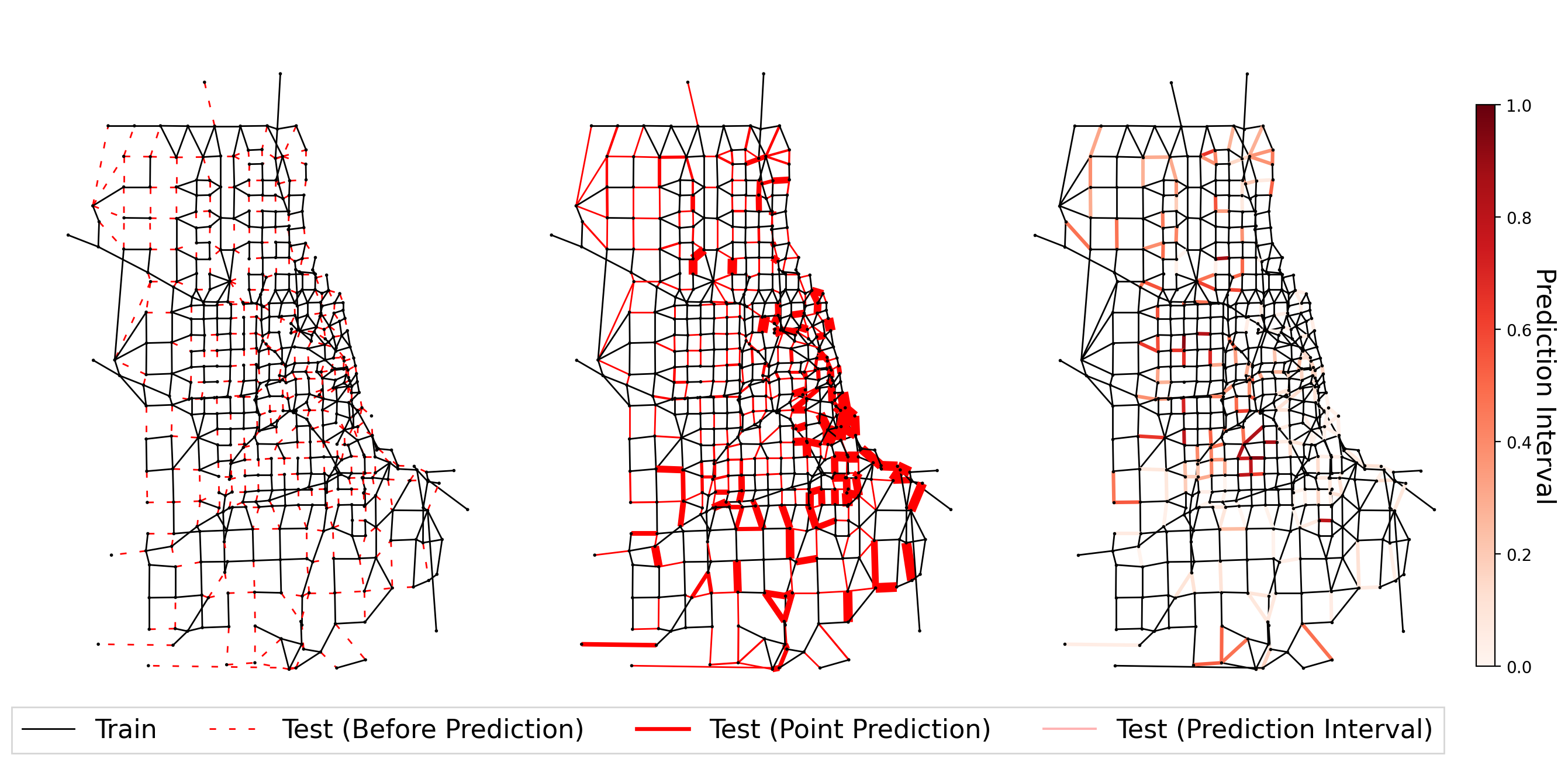}}
\small
\caption{The figure demonstrates the application of our proposed prediction models, which provide a coverage guarantee, using a snapshot of road network and traffic flow data from Chicago, IL, United States \citep{bar2021transportation}. The road network is divided into training roads (represented by black solid lines) and test roads (represented by red dashed lines). Our CQR-GAE model (Algorithm \ref{alg: CQR}) is developed to generate a prediction interval with a user-specified error rate of $\alpha=0.05$. The middle plot displays the predicted edge weights $\hat{W}$, where the line thickness increases proportionally with the predicted edge weights. The right plot illustrates the lengths of the prediction intervals, with darker lines indicating wider intervals or higher inefficiency.}
\label{fig: Chicago}
\end{figure}

We adopt a similar procedure from \citet{jia2020residual, huang2023uncertainty}, and allocate 50\%, 10\%, and 40\% for the training set $\Etrain$, validation set $\Eval$, and the combined calibration and test set $\Ect$, respectively. Figure \ref{fig: Chicago} provides an example of how the Chicago network data is divided into training/val/test/calibration edges. Additionally, the prediction outcome of our proposed CQR-GAE (Algorithm \ref{alg: CQR}) is depicted. The median plot shows the predicted edge weights, while the right-hand plot shows the width of the prediction interval.

\subsection{Baseline}
We consider traditional uncertainty sets as a baseline. This includes a basic Interval-based approach (Basic) that overlooks the graph's structure, and a Quantile Regression (QR) based method that develops the uncertainty sets through interpolative methods using information from adjacent edges.

\begin{enumerate}[label=(\arabic*), leftmargin=20pt]
    \item \textbf{Interval uncertainty set:} 
    Every edge is assigned an identical uncertainty interval $[c^{\textrm{mean}} +\lambda (c^{\textrm{min}} - c^{\textrm{mean}}), c^{\textrm{mean}} + \lambda (c^{\textrm{max}} - c^{\textrm{mean}}) ]$. Here, $c^{\textrm{mean}} = \frac{\sum_{e \in \Etrain} W_{e}}{|\Etrain|} $ represents the average cost, $c^{\textrm{max}} = \max\limits_{e \in \Etrain} W_{e}$, $c^{\textrm{min}} = \min\limits_{e \in \Etrain} W_{e}$ are the maximum and minimum costs, and $\lambda \geq 0$ is a scaling parameter.

    \item \textbf{Quantile regression-based uncertainty set:} 
    For the quantile regression-based method, the prediction sets are constructed using a GNN similar to (\ref{eq: DiGAE}) but does not apply the correction as in CQR, i.e., the prediction intervals are not corrected by the $d$ computed from the \score{s} on the calibration set in Algorithm \ref{alg: CQR}. Compared with the Basic approach, QR exploits the knowledge of graph structure. 
    
\end{enumerate}

In contrast to a recent study \citep{patel2023conformal} that deals with the robust traffic flow problem by devising travel costs based on weather predictions derived from radar network precipitation readings, our work focuses on data imputation within traffic systems. We aim to directly predict the travel costs for roads with incomplete data and establish the corresponding uncertainty set through the use of a graph neural network model.

\subsection{Evaluation Metrics}
We evaluate the performance of baseline methods with our CQR (Section \ref{subsec: CQR}) and CQR-ERC (Section \ref{subsec: ERC}) methods using metrics including the empirical coverage and $\alpha$ worst-case performance. 
For evaluating the route planning performance with different uncertainty sets, we use the true edge weights, defined as 
\begin{equation}\label{eq: true cost}
    \textrm{cost} = \textrm{vec}(W)^\top x^{*},
\end{equation}
where $x^{*} = \arg\min_{x} \langle c^{\max}, x\rangle$.

For evaluating whether various methods produce prediction sets that contain the true edge weights as expected, we use the marginal coverage, defined as 
\begin{equation}\label{eq: cover}
    \textrm{cover} = \frac{1}{|\Etest|} \sum_{e\in \Etest} \mathbbm{1}\big(\Wtest_{e} \in C_{e}\big),
\end{equation}
where $C_{e}$ is prediction interval for edge $e$.

\subsection{Results}

Baseline methods determine uncertainty regions of edge weights using probable values for each edge but overlook correlations based on edge connections within the network. This oversight leads to an inflated uncertainty volume and less informed decision-making. Once again, we provide evidence of the measurable enhancement in decision-making outcomes that arise from utilizing the more informative CQR(Section \ref{subsec: CQR}) and CQR-ERC (Section \ref{subsec: ERC}) prediction intervals. Experiments were conducted where the values of $(s,t)$ were randomly chosen from the set $\mathcal{V}$ following a uniform distribution. The outcomes of these experiments can be found from Table \ref{table: cost} to Table \ref{table: cover}. It is evident that the optimal decisions based on the Basic and QR methods, when applied in practice, result in costs that are inferior to those of the CQR and CQR-ERC methods while the prediction intervals generated by QR cannot achieve coverage guarantees. However, the CQR-ERC method, after reweighting the scores in CQR, exhibits improved performance in both coverage and optimal decision-making.

\begin{table}[H]
    \centering
    \caption{Optima cost was tested adopting Baseline, QR, CQR, CQR-ERC (training info from general information) over a set of 20 i.i.d. test samples and do 100 permutations per sample with 95\%-quantile of cost $\Tilde{c}$ with corresponding standard deviations indicated in parentheses.}
    \label{table: cost}
   \begin{tabular}{ccccc}
   \toprule
      & Baseline & QR & CQR & CQR-ERC \\
   \midrule
    cost & 138.781(20.11) & 137.383(20.20) & 137.312(20.18) & 137.310(20.18) \\
   \bottomrule
\end{tabular}
\end{table}

\begin{table}[h]
    \centering
    \caption{Optima cost was tested adopting Baseline, QR, CQR, CQR-ERC (training info from the neighborhood) over a set of 20 i.i.d. test samples and do 100 permutations per sample with 95\%-quantile of cost $\Tilde{c}$ with corresponding standard deviations indicated in parentheses.}
    \label{table: xxx}
   \begin{tabular}{ccccc}
   \toprule
      & Baseline & QR & CQR & CQR-ERC \\
   \midrule
    cost & 159.388(18.18) & 156.474(16.91) & 156.519(16.86) & 156.517(16
    .85) \\
   \bottomrule
\end{tabular}
\end{table}

\begin{figure}[h]
  \centering
  \includegraphics[width=\textwidth, clip=]{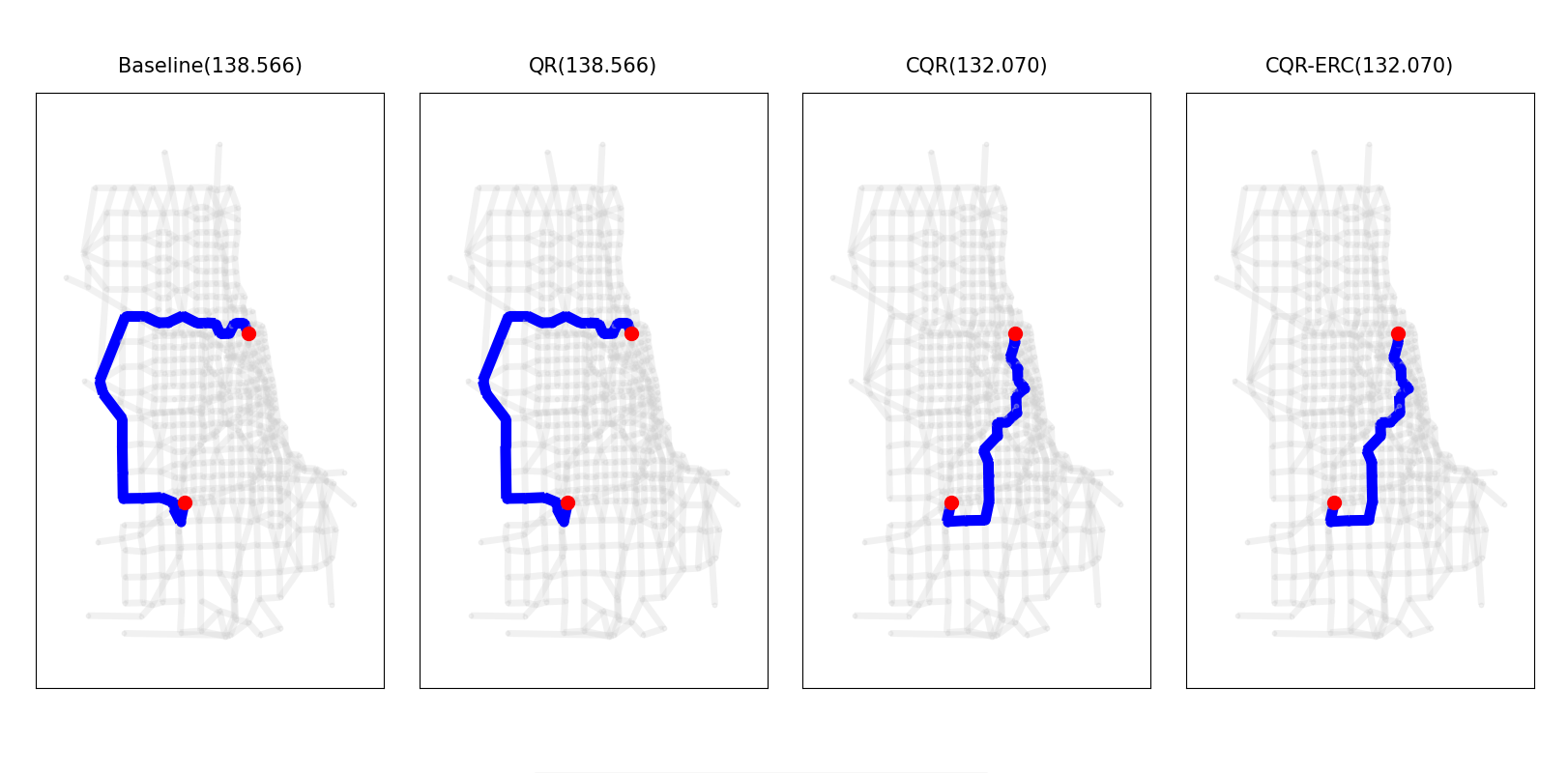} 
  \caption{The figure illustrates various decision paths generated for a given source-target pair under Baseline, QR, CQR, CQR-ERC (training info from general information). Each subplot is accompanied by a descriptive title indicating the algorithm responsible for generating the respective path, along with the actual cost associated with it.} 
  \label{fig:example} 
\end{figure}

\begin{figure}[H]
  \centering
  \includegraphics[width=\textwidth, clip=]{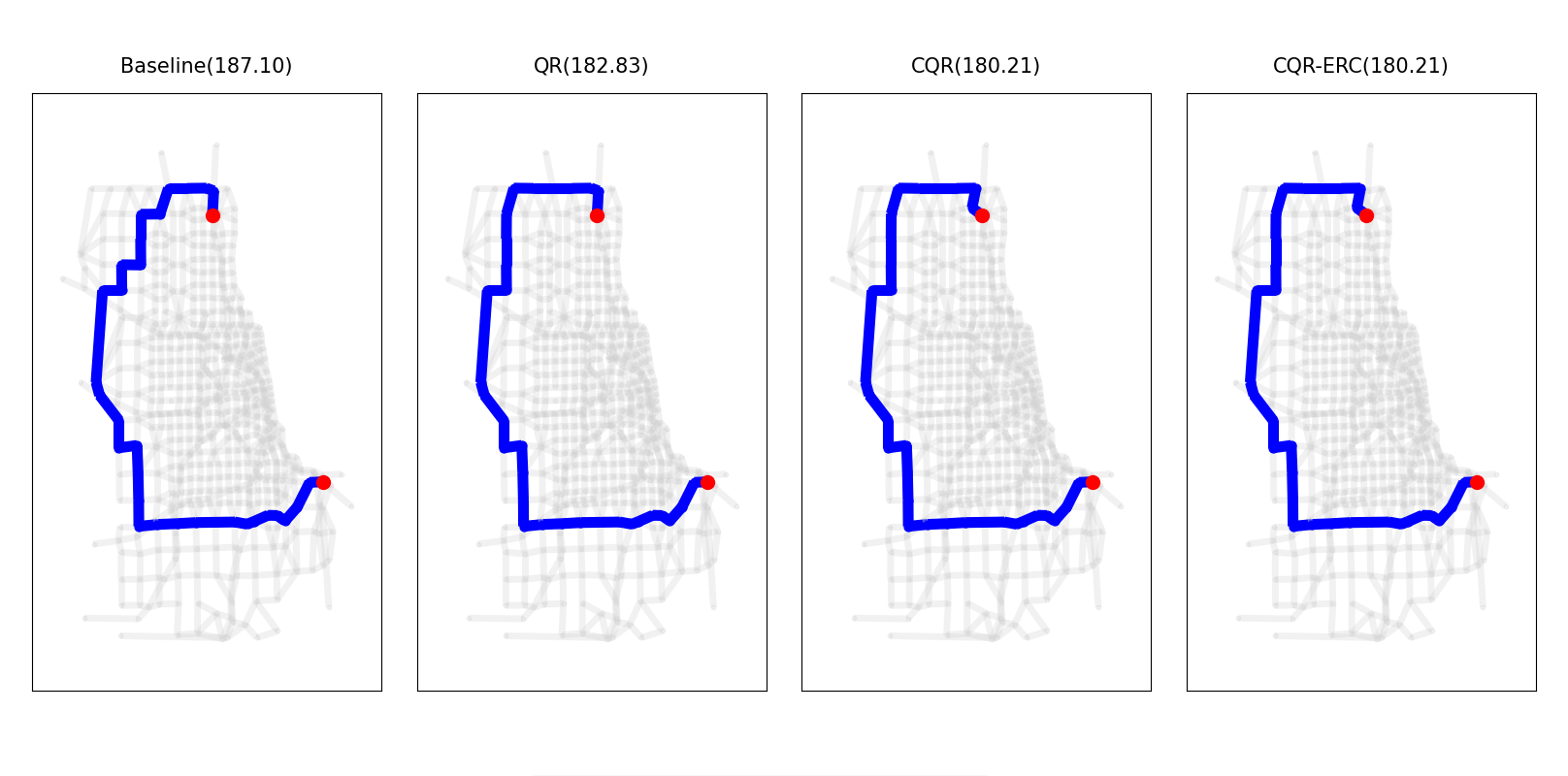} 
  \caption{The figure illustrates various decision paths generated for a given source-target pair under Baseline, QR, CQR, CQR-ERC (training info from the neighborhood). Each subplot is accompanied by a descriptive title indicating the algorithm responsible for generating the respective path, along with the actual cost associated with it.}
  \label{fig:example2} 
\end{figure}

\begin{table}[H]
    \centering
    \caption{The table shows the coverages of prediction intervals generated by the Baseline, QR, CQR, and CQR-ERC methods (training info from general information) for true edge weights under different permutation counts. The Baseline method uses intervals from 0 to the mean, while QR, CQR, and CQR-ERC use intervals at $\alpha=0.05$. Values in parentheses indicate interval sizes. The QR method's coverage does not reach 0.95, whereas CQR and CQR-ERC achieve and maintain coverage around 0.95. With fewer permutations, interval sizes vary significantly, but as permutations increase, interval sizes decrease, and the differences among the methods narrow.}
    \label{table: cover 1}
   \begin{tabular}{ccccc}
   \toprule
    number of permutation & Baseline & QR & CQR & CQR-ERC \\
   \midrule
    50 & 0.5020 & 0.9393(3.77) & 0.9515(3.97) & 0.9514(4.14) \\ 
    100 & 0.4949 & 0.9384(3.5823) & 0.9511(3.75) & 0.9511(3.79) \\ 
    1000 & 0.5014 & 0.9391(3.52) & 0.9512(3.67) & 0.9512(3.68) \\ 
   \bottomrule
\end{tabular}
\end{table}

\begin{table}[H]
    \centering
    \caption{The table shows the coverages of prediction intervals generated by the Baseline, QR, CQR, and CQR-ERC methods (training info from the neighborhood) for true edge weights under different permutation counts. The results also show that CQR and CQR-ERC achieve 0.95 coverage with fewer permutations.}
    \label{table: cover}
   \begin{tabular}{ccccc}
   \toprule
    number of permutation & Baseline & QR & CQR & CQR-ERC \\
   \midrule
    50 & 0.4920 & 0.9446(3.59) & 0.9513(3.66) & 0.9513(3.67) \\ 
    100 & 0.5081 & 0.9421(3.65) & 0.9512(3.73) & 0.9512(3.74) \\ 
    1000 & 0.5001 & 0.9499(3.76) & 0.9513(3.76) & 0.9513(3.76) \\ 
   \bottomrule
\end{tabular}
\end{table}

\begin{figure}[h]
\centering
\subfloat[interval by QR]{
		\includegraphics[scale=0.15]{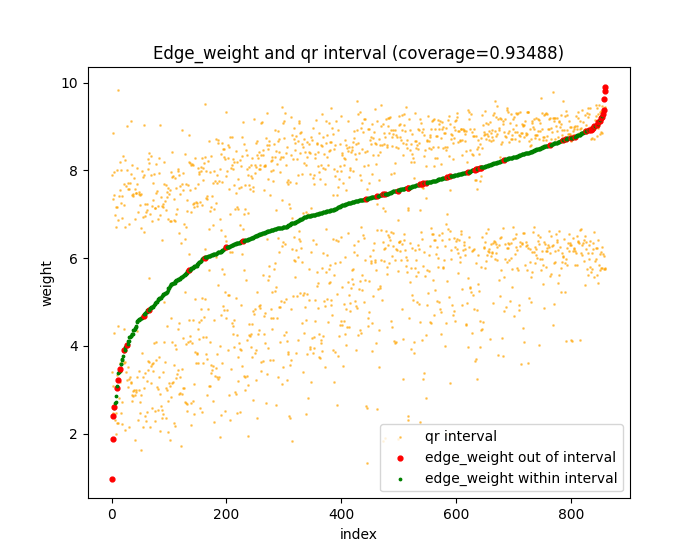}}
\subfloat[interval by CQR]{
		\includegraphics[scale=0.15]{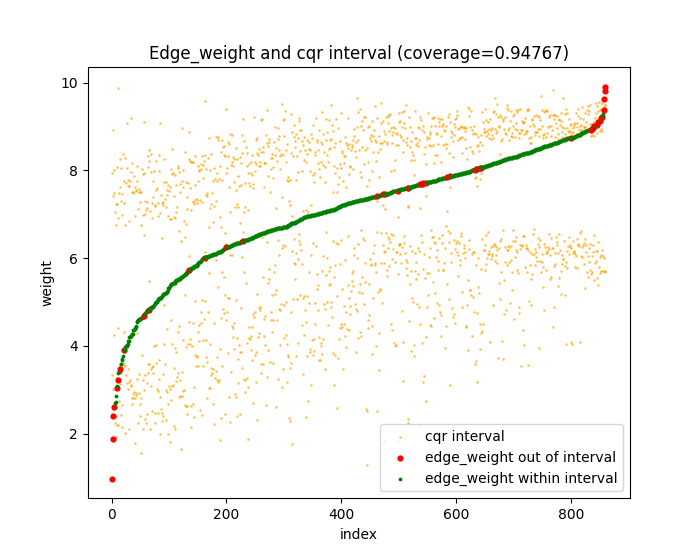}}
\subfloat[interval by CQR-ERC]{
		\includegraphics[scale=0.15]{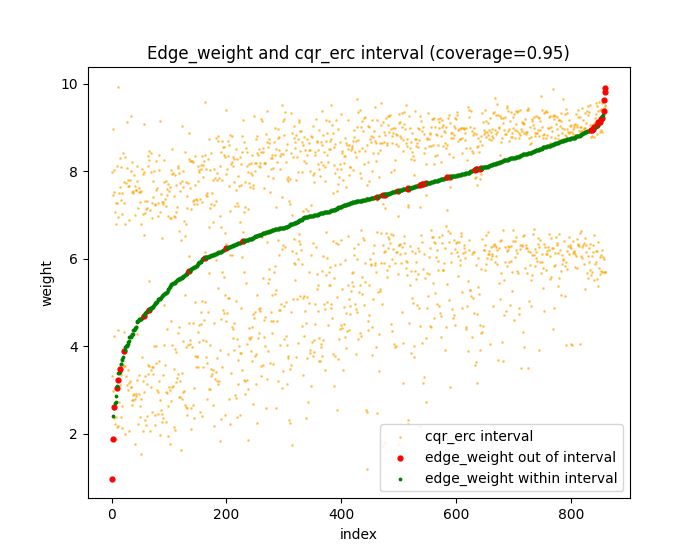}}
\caption{
This figure compares the prediction intervals generated by the QR, CQR, and CQR-ERC methods (training info from general information) at $\alpha=0.05$ with the true edge weights.
}
\label{Fg: intervals}
\end{figure}

\begin{figure}[h]
\centering
\subfloat[interval by QR]{
		\includegraphics[scale=0.15]{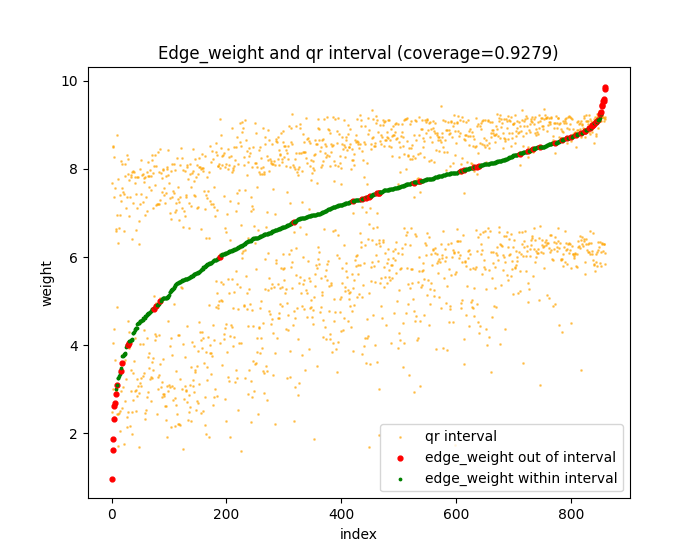}}
\subfloat[interval by CQR]{
		\includegraphics[scale=0.15]{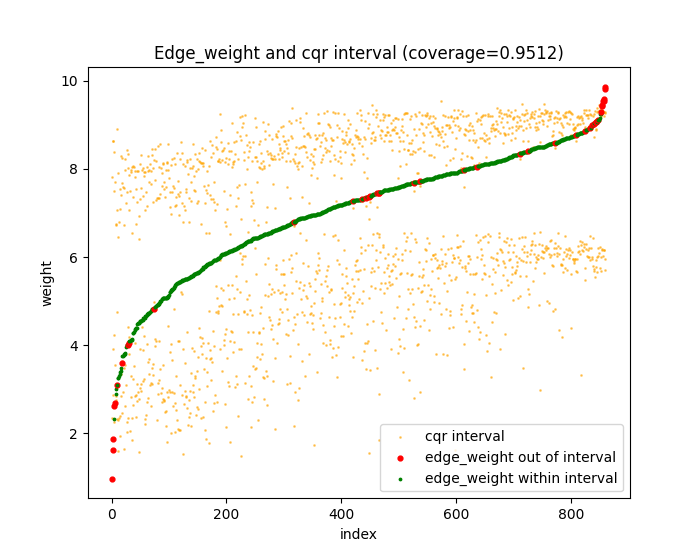}}
\subfloat[interval by CQR-ERC]{
		\includegraphics[scale=0.15]{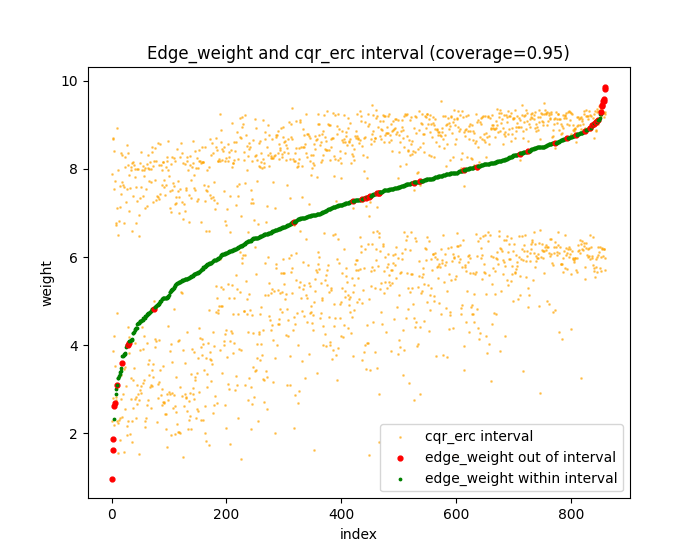}}
\caption{
This figure compares the prediction intervals generated by the QR, CQR, and CQR-ERC methods (training info from the neighborhood) at $\alpha=0.05$ with the true edge weights.
}
\label{Fg: intervals2}
\end{figure}
In Figure \ref{Fg: intervals} and Figure \ref{Fg: intervals2},  green dots indicate edges whose true weights fall within the intervals, while red dots indicate those that do not. The QR method shows significantly worse coverage than the CQR and CQR-ERC methods, as evidenced by more red dots. The CQR method improves coverage, and the CQR-ERC method achieves the highest accuracy, with coverage closest to $1-\alpha$.

\newpage
\section{Conclusion}\label{sec: conclusion}
In this paper, we explore the advantages of probabilistic prediction over point prediction, where the former predicts an entire expected distribution function instead of a single value. We enhance these probabilistic predictions by employing conformal prediction techniques, specifically CQR-GAE, which provide a coverage guarantee. By integrating uncertainty sets generated by a conformal prediction algorithm tailored to graph neural networks, we enhance decision-making in route planning within a robust optimization framework. Our evaluation of the proposed CQR-GAE model in a real-world traffic scenario shows its superior performance compared to baseline methods.

\end{document}